
\typeout{IJCAI-16 Instructions for Authors}


\documentclass{article}
\usepackage{ijcai16}

\usepackage{times}
\usepackage{marcstyle}




\title{AGM-Style Revision of Beliefs and Intentions from a Database Perspective\\(Preliminary Version)}
\author{Marc van Zee and Dragan Doder\\
Computer Science and Communications,\\ University of Luxembourg, University of Luxembourg}
\begin{document}

\maketitle

\begin{abstract}
We introduce a logic for temporal beliefs and intentions based on Shoham's database perspective. We separate strong beliefs from weak beliefs. Strong beliefs are independent from intentions, while weak beliefs are obtained by adding intentions to strong beliefs and everything that follows from that. We formalize coherence conditions on strong beliefs and intentions. We provide AGM-style postulates for the revision of strong beliefs and intentions. We show in a representation theorem that a revision operator satisfying our postulates can be represented by a pre-order on interpretations of the beliefs, together with a selection function for the intentions.
\end{abstract}

\section{Introduction}
Recently there has been an increase in articles studying the dynamics of intentions in logic~\cite{Ditmarsch2011,Icard2010,Lorini2008,vanderHoek2007,Lorini2009,grant2010}. Most of those papers take as a starting point the logical frameworks derived from Cohen and Levesque~\shortcite{Cohen1991}, which in turn formalize Bratman's~\shortcite{Bratman1987} planning theory of intention. In this paper, we take a different starting point, and study the revision of intentions from a \emph{database perspective}~\cite{Shoham2009}. The database perspective consists of a planner, a belief database and an intention database. Shoham~\shortcite{Shoham2016} describes it as ``(...) a generalization of the AGM scheme for belief revision, (...). In the AGM framework, the intelligent database is responsible for storing the planner's beliefs and ensuring their consistency. In the enriched framework, there are two databases, one for beliefs and one for intentions, which are responsible for maintaining not only their individual consistency but also their mutual consistency.'' (p.48) Shoham further developed these ideas with Jacob Banks, one of his PhD students, and behavioral economist Dan Ariely in the intelligent calendar application Timeful, which attracted over \$6.8 million in funding and was acquired by Google in 2015\footnote{http://venturebeat.com/2015/05/04/google-acquires-scheduling-app-timeful-and-plans-to-integrate-it-into-google-apps/}, who aim to integrate it into their Calendar applications. As Shoham
~\shortcite{Shoham2016} says himself: ``The point of the story is there is a direct link between the original journal paper and the ultimate success of the company.'' (p.47) Thus, it seems clear that his philosophical proposal has lead to some success on the practical side. In this paper, we investigate whether his proposal can lead to interesting theoretical insights as well. More specifically, the aim of this paper is to develop a suitable formal theory for the belief and intention database in the database perspective, and to study belief and intention revision for this theory. Following Shoham's proposal, our methodology is to generalize AGM revision~\cite{AGM1985} for temporal beliefs and intentions in terms of a representation theorem.

In the area of intention revision and reconsideration, Grant 
\etal{}~\shortcite{grant2010} combine intention revision with AGM-like postulates. There have also been a number of contributions applying AGM-style revision to action logics~\cite{Shapiro2011,Jin2004,Scherl2005,Scherl2003,Bonanno2007}. However, these proposals only characterize revision using a set of postulates, without proving representation theorems. There are also approaches that focus on the semantical level by postulating revision on a Kripke model~\cite{Baral2005}. Both \icard{}~\shortcite{Icard2010} and van Zee \etal{}~\shortcite{vanZee2015b} develop a logic based on Shoham's database perspective and prove representation theorems, but we will analyze the shortcomings of their proposals in the next section.

The three main technical results of this paper are as follows: First we review two recent formalisations based on Shoham's database perspective, namely the IPS framework~\cite{Icard2010} and the PAL framework~\cite{vanZee2015}, and we discuss the shortcomings of these logics. Secondly, we extend PAL in order to define a coherence condition on strong beliefs and intentions, and we separate strong, or intention-independent, beliefs from weak beliefs. Lastly, we characterize revision of beliefs and intentions through AGM-style postulates and we prove a representation theorem relating the postulates for revision to an ordering among interpretations and a selection function that accommodates new intentions while restoring coherence. The three next sections of the paper are in line with these three results.

\section{Preliminaries: The Database Perspective}

We review Shoham's database perspective, and we discuss the main limitations of two recent formalizations. For the first formalization (IPS), the separation between strong and weak beliefs is problematic, and the representation theorem only holds for a specific type of beliefs. For the second formalization (PAL), intention revision is missing.

\subsection{Shoham's Database Perspective}

Shoham's database perspective contains a planner (e.g., a STRIPS-like planner) that is itself engaged in some form of practical reasoning. In the course of planning, it may add actions to be taken at various times in the future to an intention database and add observations to a belief database. The intentions are \emph{future-directed intentions} of the form $(a,t)$, meaning that action $a$ will be executed at time $t$.\footnote{The notion of intention here is clearly quite restrictive and important characteristics of intentions are missing. See the conclusion for a discussion.} The beliefs are also time-indexed, and are of the form $p_t$, meaning that $p$ is true at time $t$. Shoham treats the planner as a ``black box'': It provides the databases with input but its internal workings are unknown. Shoham proposes informal revision procedures for beliefs and intentions based on the following coherence conditions:
\begin{enumerate}
\item 
If two intended actions immediately follow one another, the earlier cannot have postconditions that are inconsistent with the preconditions of the latter.
\item 
If you intend to take an action you cannot believe that its preconditions do not hold.
\item 
If you intend to take an action, you believe that its postconditions hold.
\end{enumerate}

Note that requirement 2 and 3 describe an asymmetry between pre-and postconditions: The postconditions are believed to be true after an intended action, but the preconditions may not. Therefore, we might think of the requirements as one of ``optimistic'' beliefs. According to Shoham~\shortcite{Shoham2009}: ``It is a good fit with how planners operate. Adopting an optimistic stance, they feel free to add intended actions so long as they are consistent with current beliefs.'' (p.7) 

\subsection{\icard{} (IPS)}
\icard{}~\shortcite{Icard2010} develop a ``formal semantical model to capture action, belief and intention, based on the `database perspective''' (p.1). They assume a set of atomic sentences \textsf{Prop} = $\{p,q,r,\ldots\}$ and deterministic primitive actions \textsf{Act}=$\{a,b,c,\ldots\}$. Entries in the belief database are represented by a language generated from:  $$\varphi := p_t\mid pre(a)_t\mid post(a)_t\mid Do(a)_t\mid\Box\varphi\mid\varphi\wedge\varphi\mid\neg\varphi$$ with $p\in\textsf{Prop}, a\in\textsf{Act}$, and $t\in\mathbb{Z}$. $p_t$ means that $p$ is true at time $t$, $Do(a)_t$ means that the agent does action $a$ at time $t$, and $pre(a)_t$ and $post(a)_t$ represent respectively the precondition and postcondition of action $a$ at time $t$.

\icard{} use a semantics of \emph{appropriate paths}. They define
$P=\mathcal{P}{(\mathsf{Prop}\cup\{pre(a),post(a) : a\in\mathsf{Act}\})},$ and a \emph{path} $\pi:\mathbb{Z}\rightarrow (P\times\mathsf{Act})$ as a mapping from a time point to a set of proposition-like formulas true at that time (denoted $\pi(t)_1$) and the next action $a$ on the path (denoted $\pi(t)_2$). They define an equivalence  relation $\pi\sim_t\pi'$, which means that $\pi$ and $\pi'$ represent the same situation up to $t$. Using this, they propose a notion of appropriateness:

\begin{definition}[Appropriate Set of Paths]
A set of paths $\Pi$ is \emph{appropriate} iff for all $\pi\in\Pi$:
\begin{itemize}
\item If $\pi(t)_2=a$, then $post(a)\in\pi(t+1)_1$,
\item If $pre(a)\in\pi(t)_1$, then there exists $\pi'\sim_t\pi$ s.t. $\pi'(t)_2=a$.
\end{itemize}
\end{definition}

The truth definition $\models_\Pi$ is defined relative to an appropriate set of paths $\Pi$, and the modality is defined as follows: $$\pi,t\models_\Pi \Box\varphi\text{, iff for all }\pi'\in \Pi\text{, if } \pi\sim_t\pi'\text{ then } \pi',t\models\varphi.$$ 
A model for a formula is an appropriate set of paths. They introduce an intention database $I=\{(a,t),\ldots\}$ as a set of action-time pairs $(a,t)$ and put the following coherence condition on their logic:
\begin{flalign*}
&Cohere^*(I) = \Diamond\bigwedge_{(a,t)\in I} pre(a)_t.
\end{flalign*}
This captures the intuition that an agent considers it possible to carry out all intended actions. They state that a set of models is coherent if and only if there exists a model in which $Cohere^*(I)$ is true. IPS distinguishes \emph{intention-contingent}, or weak, beliefs from \emph{non-contingent}, or strong, beliefs. Contingent beliefs $B^I$ are obtained from a belief-intention database $(B,I)$ as follows: $B^I=Cl(B\cup \{Do(a)_t:(a,t)\in I\}).$  In order to switch from belief bases to an appropriate set of paths, \icard{} introduce the functions $\rho$ and $\beta$: ``Given a set of formulas $B$, we can consider the set of paths on which all formulas of $B$ hold at time 0, denoted $\rho(B)$. Conversely, given a set of paths $\Pi$, we let $\beta(\Pi)$ be defined as the set of formulas valid at 0 in all paths in $\Pi$.'' (p.3)

The first issue with IPS is that the definition of non-contingent beliefs is problematic for coherence. The following example shows that non-contingent beliefs that are dependent on the actual path can lead to a coherent agent with inconsistent weak beliefs.

\begin{example}
Consider an IPS belief-intention base $(B,I)$ with the belief base $B=\{p_1,\neg\Diamond (do(a)_2\wedge p_1)\}$ and intention base $I=\{(a,2)\}$. While $B$ is consistent with $\Diamond pre(a)_2$, $B^I$ is inconsistent since $\neg\Diamond(do(a)_2\wedge p_1)\wedge do(a)_2$ derives $\neg p_1$, but this is inconsistent with the initial belief $p_1$. 
\end{example}

In Section 3 we will define our notion of coherence that together with a separation between strong and weak beliefs resolves this problem.

The second issue with IPS is that the definition of $\rho$ is circular, and as a result it does not seem to be possible for all formulas of their logic. Consider the following example.

\begin{example} Suppose $B=\{\Box p_1\vee \Box \neg p_1\}$. The set of paths $\rho(B)$ contains all paths $\pi$ for which $\pi\models B$, i.e. $\pi\models\Box p_1\vee \Box \neg p_1$. Take such a path $\pi$ arbitrary, so either $\pi\models\Box p_1$ or $\pi\models\Box \neg p_1$. Suppose $\pi\models\Box p_1$. Then it follows that for all $\pi'\sim_t\pi$ we have $\pi'\models p_1$. On the other hand, if $\pi\models\Box \neg p_1$, then it follows that for all $\pi'\sim_t\pi$ we have $\pi'\models \neg p_1$. So in both cases we end up with two different sets for $\rho(B)$. In other words, the set of paths $\rho(B)$ is not defined.
\end{example}

Essentially, according to IPS, $\rho(B) = \{\pi \mid \pi \models_\Pi B \}$. The set of appropriate paths for a belief base $B$ is thus constructed using $\models_\Pi$. However, $\models_\Pi$ is aleady defined relatively to some set of paths $\Pi$. Therefore, the definition of $\rho$ is circular. It seems that the $\rho$ function only works for belief bases containing no modalities (in other words, to construct a single path). We omit details for space constraints, but the construction of the canonical model in the proof of their representation theorem uses the $\rho$ function to switch from a belief base to a set of paths (see Proof Sketch in the Appendix of \icard{}~\shortcite{Icard2010}). Therefore, the representation theorem does not hold for all formulas of the logic, since it is not possible to apply the function $\rho$ to all beliefs. 

Summarizing, we recognize two shortcoming of the IPS framework as a formal basis for the database perspective: The definition of contingent beliefs is problematic, and the representation theorem does not hold for belief bases containing modalities.

\subsection{\vanzee{} (PAL)}
\vanzee{}~\shortcite{vanZee2015b} develop \logicname{} (\logic{}) as an alternative to IPS, claiming that the IPS framework contains an unsound axiom and that their logic is noncompact~\cite{vanZee2015}. They define $\act{}=\{a,b,c,\ldots\}$ as a finite set of deterministic primitive actions, and $\prop{}=\{p,q,r,\ldots\}\cup \{pre(a),post(a)\mid a\in\act{}\}$ as  a finite set of propositions. They denote atomic propositions with $\chi$. PAL differs syntactically from IPS in that the $\Box$-modality is indexed by a time-point. Their language \lang{} is inductively defined by the following BNF grammar:
\begin{align*}
\varphi ::= \chi_t\mid do(a)_t\mid\Box_t\varphi\mid\varphi\wedge\varphi\mid\neg\varphi
\end{align*}
\logic{} uses a CTL*-like tree semantics consisting of a tree $T=(S,R,v,act)$ where $S$ is a set of states, $R$ is an accessibility relation that is serial, linearly ordered in the past and connected, $\valp:S\rightarrow 2^{\prop{}}$ is a valuation function from states to sets of propositions, and $\vala:R\rightarrow \act{}$ is a function from accessibility relations to actions, such that actions are deterministic, i.e. if $\vala((s,s'))=\vala((s,s''))$, then $s'=s''$. Similarly to \icard{}, PAL uses an equivalence relation $\sim_t$ on paths. Using this equivalence relations they define a model as a pair $(T,\pi)$ on which the same conditions hold as an IPS model. PAL formulas are evaluated in a model on a path. \vanzee{} axiomatize \logic{} and show that it is sound and strongly complete, i.e. $T\vdash\varphi$ iff $T\models\varphi$. They characterize AGM belief revision in this logic and restrict time up to some $t$. Using these constraints, they are able to represent a belief set $B$ as a propositional formula $\psi$ such that $B=\{\varphi\mid\psi\vdash\varphi\}$ and they prove the Katsuno and Mendelzon (KM)~\shortcite{Katsuno1991} and the Darwiche and Pearl (DP)~\cite{Darwiche1997} representation theorems.

\vanzee{} only consider revision of PAL formulas and thus do not consider the problem of intention revision, and therefore also does not distinguish between strong and weak beliefs. However, PAL uses a standard branching time (CTL*-like~\cite{Reynolds2002}) semantics, and it therefore also does not suffer from the shortcomings that we addressed in the previous subsection. Therefore, we take PAL as a starting point and use the remainder of this paper to extend it so that we can study intention revision.

\section{The Belief-Intention Database}
We assume an intention database $I=\{(a,t),\ldots\}$ consisting of time-indexed actions. We next define a coherence condition on beliefs and intentions in PAL, and we separate strong and weak beliefs. 

\subsection{The Coherence Condition}

We first demonstrate that $Cohere^*(I)$ of IPS (Section 2.2) is too permissive because it allows models in which intentions are not jointly executable.

\begin{figure}[h!]
\centering
\begin{tikzpicture}[->,>=stealth',shorten >=1pt,auto,node distance=1cm,
  thin,
  main node/.style={circle,draw,fill=white,font=\sffamily\scriptsize\bfseries}, 
  every label/.style={font=\scriptsize},
  time node/.style={font=\scriptsize,text=black!90},
  test line/.style={gray!80, dashed,-}]
  
  \draw[test line] (-0.5,-1) -- (-0.5, 1.5);
  \draw[test line] (2,-1.0) -- (2, 1.5);
  \draw[test line] (4.6,-1.0) -- (4.6, 1.5);
  
  \node[time node] (t0) at (-0.5,1.6) {$t=0$};
  \node[time node] (t1) at (2.2,1.6) {$t=1$};
  \node[time node] (t2) at (4.6,1.6) {$t=2$};
  
  \node[main node,label={[label distance=-0.1cm]90:$\{pre(nmr)\}$}] (s0) at (-0.5,0) {$s_0$};
  \node[main node,label={[label distance=-0.1cm]90:$\{post(nmr)\}$},position=10:20mm from s0] (s1) {$s_1$};
   \node[main node,label={[label distance=-0.1cm]0:$\{post(nop)\}$},position=10:20mm from s1] (s2) {$s_2$};
  \node[main node, position=-10:20mm from s0] (s3) {$s_3$};
  \node[position=-130:0mm from s3,align=center,font=\scriptsize] (labels3) {$\{pre(ijcai),$\\$post(nop)\}$};  
  
  \node[main node,label={[label distance=-0.1cm]00:$\{post(ijcai)\}$},position=10:20mm from s3] (s4) {$s_4$};
  \node[main node,label={[label distance=-0.1cm]0:$\{post(nop)\}$},position=-10:20mm from s3] (s5) {$s_5$};
 
  \path[every node/.style={font=\sffamily\small}]
    (s0) edge[draw=black] node [above left] {nmr} (s1)
    (s1) edge[draw=black] node [above] {nop} (s2)
    (s0) edge[draw=black, line width=2, ultra thick] node [below left] {nop} (s3)
    (s3) edge[draw=black, line width=2, ultra thick] node [above left] {ijcai} (s4)
    (s3) edge[draw=black] node [below left] {nop} (s5);
\end{tikzpicture}
\caption{Example Model from $t=0$ to $t=2$.}
\label{fig:example1}
\label{fig:models}
\end{figure}
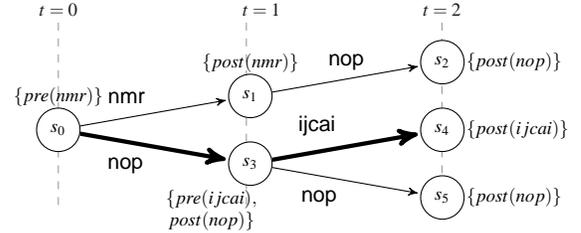

\begin{example}[Running Example]
\label{expl:1}
An agent is considering to attend the NMR workshop at time 0 and the IJCAI conference at time 1. Although it would like to attend both events, there is insufficient budget available. Consider a possible partial semantic model of this situation depicted in Figure~\ref{fig:example1}, where the thick path represents the actual path. In the actual path, the agent believes it does nothing at time 0 and attends IJCAI at time 1. It also considers it possible to attend the NMR workshop at time 0 and do nothing at time 1 in an alternative path. However, it does not consider it to be possible to attend both events. 
\end{example}

Suppose the agent of the running example has two intentions: $I=\{(nmr,0),(ijcai,1)\}$. Intuitively, the agent's intentions do not cohere with its beliefs, because it believes it cannot execute them both due to insufficient budget. However, according to $Cohere^*(I)$ the agent is coherent because the preconditions of all intentions hold on some path (namely the current path). Thus, $Cohere^*(I)$ does not fulfill Shoham's coherence condition 1 (Section 2.1). More specifically, the problem is that it is not possible to define the precondition of a set of actions in terms of preconditions of individual actions, because it cannot be ensured that all the intentions are fulfilled on the same path as well. Therefore, in order to formalize a coherence condition in PAL, we extend the language with preconditions of finite action sequences, which ensures that after executing the first action, the precondition for the remaining actions are still true. We modify the language, the definition of a model, the axiomatization, and we show that the new axiomatization is sound and strongly complete. We call the new logic \prelogic{} (\logicname{} with extended Preconditions).

\begin{definition}[\prelogic{} Language] The language \prelang{} is obtained from \lang{} by adding $\{pre(a,b,\dots)_t\mid \{a,b,\ldots\}\subseteq \act{},t\in\mathbb{N}\}$ to the set of propositions. Moreover, $Past(t)$ consists of boolean combinations of $p_{t'}, pre(a,b,\ldots)_{t'}, \Box_{t'}\varphi$ and $do(a)_{t'-1}$ where $t'\le t$ and $\varphi$ is some formula from \prelang{}.
\end{definition}

We also extend the definition of a model accordingly.

\begin{definition}[\prelogic{} Model]
A \emph{model} is a pair $(T,\pi)$ with $T=\tree{}$ such that for all $\pi\in T$ the following holds:
\begin{enumerate}
\item If $\vala(\pi_t)=a$, then $post(a)\in \valp(\pi_{t+1})$,
\item If $pre(a)\in v(\pi_t)$, then there is some $\pi'$ in $T$ with $\pi\sim_t\pi'$ and $\vala(\pi'_t)=a$,
\item If $pre(\ldots,a,b)_t\in v(\pi_t)$, then $pre(\ldots,a)_t\in v(\pi_t)$,
\item If $pre(a,b,\ldots)_t\in v(\pi_t)$, then there is some $\pi'$ in $T$ with $\pi\sim_t\pi'$, $act(\pi'_t)=a$, and $pre(b,\ldots)_{t+1}\in v(\pi'_{t+1})$.
\end{enumerate}

We refer to models of \prelogic{} with $m_1,m_2,\ldots$, we refer to sets of models with $M_1,M_2,\ldots$, and we refer to the set of all models with \mods{}. 
\end{definition}

\begin{definition} The logic \prelogic{} consists of the all the axiom schemas and rules of PAL~\cite{vanZee2015b} (Def. 7), and the following two:

$pre(\ldots,a,b)_t\rightarrow pre(\ldots,a)_t$\hfill (A11)

($pre(a,b,\ldots)_t\wedge do(a)_t)\rightarrow pre(b,\ldots)_{t+1}$\hfill(A12)\\

\noindent
The relation $\vdash$ is defined in the usual way with the restriction that necessitation can be applied to theorems only. 
\end{definition}

\begin{theorem}[Completeness Theorem] The logic \prelogic{} is sound and strongly complete, i.e. $T\vdash\varphi$ iff $T\models\varphi$.\footnote{We provide the full proofs of all theorems and propositions in this paper in a separate technical report: http://www.dropbox.com/s/798g6ki0aqt3gds/ijcai2016proofs.pdf?dl=1}
\end{theorem}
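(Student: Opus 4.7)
The plan is to extend the soundness and strong completeness proof of PAL from \vanzee{}~\shortcite{vanZee2015b}, showing that the additional axioms (A11) and (A12) precisely capture the new semantic conditions (3) and (4) on models. Since PAL's axiomatization is already known to be sound and strongly complete, the bulk of the work consists of verifying (i) that the new axioms are valid and (ii) that the Henkin-style canonical model construction of PAL continues to yield a model satisfying the two new conditions when the set of formulas is enriched with the precondition-sequence atoms.

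For soundness, I would verify that (A11) and (A12) hold in every \prelogic{} model. (A11) follows directly from condition (3) at any path-time point. (A12) follows from condition (4) together with the determinism requirement on $\vala$: if $pre(a,b,\ldots)_t$ holds at $\pi_t$, condition (4) gives a $t$-equivalent path $\pi'$ on which action $a$ is taken and $pre(b,\ldots)_{t+1}$ holds at $\pi'_{t+1}$; but if additionally $do(a)_t$ already holds at $\pi$, then $\vala(\pi_t)=a=\vala(\pi'_t)$, so by determinism plus past-equivalence we get $\pi_{t+1}=\pi'_{t+1}$, and (A12) transfers.

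For strong completeness, I would re-run PAL's canonical model construction with the extended language \prelang{}. The standard Lindenbaum step yields, from any consistent $T$, a maximally consistent, deductively closed extension $T^*$ closed under the PAL rules plus (A11), (A12); from $T^*$ one extracts the canonical tree $(S,R,v,act)$ and actual path $\pi$ exactly as in PAL. The new work is to check that this canonical model satisfies conditions (3) and (4). Condition (3) is immediate from (A11): if $pre(\ldots,a,b)_t$ is in the maximal consistent set labelling $\pi_t$, then so is $pre(\ldots,a)_t$. Condition (4) is where a real argument is needed: given $pre(a,b,\ldots)_t$ at $\pi_t$, I must produce a $t$-equivalent canonical path $\pi'$ with $\vala(\pi'_t)=a$ and $pre(b,\ldots)_{t+1}$ true at $\pi'_{t+1}$. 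I would argue this by consistency: the set $\mathit{Past}(t)$ of $\pi$ together with $do(a)_t \wedge pre(b,\ldots)_{t+1}$ is consistent, since otherwise $\mathit{Past}(t) \vdash do(a)_t \to \neg pre(b,\ldots)_{t+1}$, which together with (A12) would yield $\mathit{Past}(t) \vdash \neg pre(a,b,\ldots)_t$, contradicting the maximal consistency of the set at $\pi_t$. Extending this consistent set to a maximal consistent one gives the desired canonical path $\pi'$.

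The main obstacle is precisely this last step: ensuring that (A12) lets one \emph{witness} the existential claim in condition (4) within the canonical model, and that the witness path is produced coherently with the already-existing past of $\pi$ (so that $\pi'\sim_t\pi$). This requires a careful interaction between the $\mathit{Past}(t)$ fragment (which fixes everything up to time $t$) and the Henkin-style saturation at times $\ge t$. Once this is in place, the standard truth-lemma argument from PAL goes through essentially unchanged for the extended language, since the new atoms $pre(a,b,\ldots)_t$ occur only at atomic positions and do not interact with the modal clauses beyond what conditions (3)--(4) already guarantee. Strong completeness then follows in the usual way: a consistent set has a canonical model, so any $T\models\varphi$ that is not derivable would make $T\cup\{\neg\varphi\}$ consistent and hence satisfiable, contradicting $T\models\varphi$.
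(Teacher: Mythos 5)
Your proposal matches the paper's approach: the paper's entire proof is the remark that the result is ``a direct extension of the proof of \vanzee{} with the new axioms and the new conditions on a model,'' and your sketch fills in exactly that plan (soundness of (A11)/(A12) against conditions (3)/(4), plus re-running the canonical model construction and witnessing condition (4)). One small caution: in your consistency argument for condition (4), (A12) together with $\mathit{Past}(t)\vdash do(a)_t\to\neg pre(b,\ldots)_{t+1}$ yields only $\mathit{Past}(t)\vdash\neg(pre(a,b,\ldots)_t\wedge do(a)_t)$ rather than $\neg pre(a,b,\ldots)_t$ outright, so the witness really has to be produced via the derivable scheme $pre(a,b,\ldots)_t\to\Diamond_t(do(a)_t\wedge pre(b,\ldots)_{t+1})$ (using (A11), the PAL axiom for $pre(a)$, persistence of time-$t$ atoms across $\sim_t$, and (A12)), but this is exactly the ``careful interaction'' you already flag.
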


The proof of the theorem is a direct extension the proof of \vanzee{} with the new axioms and the new conditions on a model.  

Note that it is not directly possible in \prelogic{} to express preconditions for actions that do not occur directly after each other. In order to do so, we simply make a disjunction over all possible action combinations in the time points in between the actions. Thus, if for instance $\act{}=\{a,b\}$ and $I=\{(a,1),(b,3)\}$, then $Cohere(I) = \Diamond_0\bigvee_{x\in\act{}} pre(a,x,b)_1 =\Diamond_0( pre(a,a,b)_1\vee pre(a,b,b)_1)$.\footnote{Our construction of preconditions over action sequences may lead to a coherence condition involving a big disjunction. This is a drawback in terms of computational complexity. Alternatively, one may explicitly denote the time of each precondition, e.g. $pre(a,b)_{(t_1,t_2)}$. We chose the former since it is conceptually closer to the original syntax, but the latter can be implemented straightforwardly.}

\begin{definition}[Coherence]
Given an intention database $I=\{(b_{t_1},t_1),\ldots,(b_{t_n},t_n)\}$ with $t_1<\ldots<t_n$, let
\begin{flalign*}
Cohere(I) = \Diamond_0\bigvee_{\substack{a_k\in Act{} : k\not\in\{t_1,\ldots, t_n\}\\a_k=b_k : k\in\{t_1,\ldots,t_n\}}} pre(a_{t_1},a_{t_{1}+1},\ldots,a_{t_n})_{t_1}.
\end{flalign*}
For a given set of models $M$, we say that $(M,I)$ is \emph{coherent} iff there exists some $m\in M$ with $m\models Cohere(I)$.
\end{definition}

We now show that the new coherence conditions correctly specifies that the agent of our running example is not coherent.

\begin{example}[Continued] The models of the agent of our running example are not coherent with the intention database $I=\{(nmr,0),(ijcai,1)\}$, because the agent does not have the possibility to jointly execute both intentions (i.e. the preconditions for both actions together is false). Thus, none of the models satisfy $\Diamond_0 pre(nmr,ijcai)_0$, even though they satisfy $\Diamond_0 pre(nmr)_0\wedge \Diamond_0 pre(ijcai)_1$.
\end{example}

\subsection{Strong and Weak Beliefs}

The idea behind strong beliefs is that they represent the agent's ideas about what is inevitable, no matter how it would act in the world. Weak beliefs, on the other hand, represent the beliefs of an agent that are the consequence of its planned actions~\cite{vanderHoek2003}. Formally, we define strong beliefs at some time $t$ as formulas that start either with $\Diamond_t$ or $\Box_t$. The set of all \emph{strong beliefs} $\sbel{}_t$ in time $t$ for \lang{} is inductively defined by the following BNF grammar: $$\varphi ::= \ \Box_t\psi\mid \varphi\wedge\varphi\mid\neg\varphi,$$ where $\psi\in \lang{}$ and $t\in \timeflow{}$.  A \emph{strong belief set in $t$} is $B_t\subseteq\sbel_t$. In the remainder of this paper, we assume $t=0$ and we simply write $\sbel{}$ and $B$ to abbreviate $\sbel{}_0$ and $B_0$.

The weak beliefs $WB(B,I)$ are obtained from the strong beliefs by adding beliefs that are contingent on the intentions of the agent. In other words, the agent weakly believes everything that it strongly believed and moreover that all intentions will be realised, and everything that follows from this: $$WB(B,I)=Cl(B\cup \{do(a)_t\mid (a,t)\in I\}).$$

\begin{example}[Continued] Some example strong beliefs of the agent of Figure~\ref{fig:example1} are $\Diamond_0 do(nmr)_0$, $\Diamond_0 do(ijcai)_1$, and $\Box_0 \Diamond_1 post(nop)_2$. If the agent has the intention database $I=\{(ijcai,1)\}$, then $post(ijcai)_2$ is one of its weak beliefs.
\end{example}

\section{Belief and Intention Revision}
\paragraph{Postulates} 
Following KM, we fix a way of representing a belief set $B$ consisting of strong beliefs by a propositional formula $\psi$ such that $B=\{\varphi\mid\psi\vdash\varphi\}$. Since intentions and beliefs that have been added by a planner are naturally bounded up to some time point $t$, we define a bounded revision function and we restrict the syntax and semantics of \prelogic{} up to a specific time point. As a consequence, it is then possible to obtain the single formula $\psi$ for a set of strong beliefs $B$ (Corollary~\ref{cor:strongbeliefformula}). We first define some notation that we use in the rest of this paper.

\begin{definition}
An agent is a pair $(\psi,I)$ consisting of a belief formula $\psi$, and an intention base $I$. $\agents{}$ denotes the set of all agents, $\sbel$ denotes the set of all strong beliefs, $\ib{}$ denotes the set of all intentions, and $\idb$ denotes the set of all intention databases. We denote $\agents{},\sbel{},\ib{}$, and $\idb$ bounded up to $t$ with respectively $\agents\rest{}, \sbel{}\rest{}, \ib{}\rest{}$, and $\idb\rest{}$. However, if the restriction is clear from context, we may omit the superscript notation.
\end{definition}

We now define a bounded revision function $\agr$ revising an agent $(\psi,I)$ with a tuple $(\varphi,i)$ consisting of a strong belief $\varphi$ and an intention $i$, denoted $(\psi,I)\agr(\varphi,i)$, where $t$ is the maximal time point occurring in $\psi, I, \varphi$, and $i$. 

\begin{definition}[Agent Revision Function] 
An \emph{Agent revision function} $\agr:\agents{}\times (\sbel{}\times \ib{})\rightarrow \agents$ maps an agent, a strong belief formula, and an intention--- all bounded up to $t$--- to an agent bounded up to $t$ such that if, \\
$(\psi,I)\agr(\varphi,i) = (\psi',I')$,\\
$(\psi_2,I_2)\agr(\varphi_2,i_2)=(\psi_2',I_2')$,\\
then following postulates hold:\\
$(P1)$ $\psi'$ implies $\varphi$.\\
$(P2)$ If $\psi\wedge\varphi$ is satisfiable, then $\psi'\equiv \psi\wedge\varphi$.\\
$(P3)$ If $\varphi$ is satisfiable, then $\psi'$ is also satisfiable.\\
$(P4)$ If $\psi\equiv\psi_2$ and $\varphi\equiv\varphi_2$ then $\psi'\equiv\psi_2'$.\\
$(P5)$ If $\psi\equiv\psi_2$ and $\varphi_2\equiv \varphi\wedge\varphi'$ then $\psi'\wedge\varphi'$ implies $\psi_2'$.\\
$(P6)$ If $\psi\equiv\psi_2$, $\varphi_2\equiv \varphi\wedge \varphi'$, and $\psi'\wedge\varphi'$ is satisfiable,\\
\hspace*{2cm}then $\psi_2'$ implies $\psi'\wedge\varphi'$.\\
$(P7)$ $(\psi',I')$ is coherent.\\
$(P8)$ If $(\psi',\{i\})$ is coherent, then $i\in I'$.\\
$(P9)$ If $(\psi',I\cup\{i\})$ is coherent, then $I\cup\{i\}\subseteq I'$.\\
$(P10)$ $I'\subseteq I\cup \{i\}$.\\
$(P11)$ If $I=I_2$, $i=i_2$, and $\psi'\equiv\psi_2'$, then $I'=I_2'$.\\
$(P12)$ For all $I''$ with $I'\subset I''\subseteq I\cup\{i\}$:$(\psi',I'')$ is not coherent.
\end{definition}

Postulates (P1)-(P6) are simply the KM postulates in our setting, which are equivalent to the AGM postulates~\cite{Katsuno1991}. They also state that the revision of strong beliefs does not depend on the intentions. Postulates (P7)-(P10) also appear in IPS. Postulate (P7) states that the outcome of a revision should be coherent. Postulate (P8) states that the new intention $i$ take precedence over all other current intentions; if possible, it should be added, even if all current intentions have to be discarded. Postulate (P9) and (P10) together state that if it is possible to simply add the intention, then this is the only change that is made. Postulate (P11) states that if we revise with the same $i$ but with a different belief, and we end up with the same belief in both cases, then we also end up with the same intentions. Finally, (P12) states that we do not discard intentions unnecessarily. This last postulate is comparable to the \emph{parsimony requirement} introduced by Grant \etal{}~\shortcite{grant2010}.

\paragraph{Representation Theorem} We next characterize all revision schemes satisfying (P1)-(P12) in terms of minimal change with respect to an ordering among interpretations and a selection function accommodating new intentions while restoring coherence. We bound models of strong beliefs up to $t$, which means that all the paths in the model are ``cut off'' at $t$. This ensures finitely many non-equivalent formulas for some belief set $B$. A \emph{$t$-bounded model} $\restm{} = (\restt,\restp)$ is a model containing a tree $T$ in which all paths, including $\pi$, have length $t$. Strong beliefs are about possibility and necessity, and they are independent of a specific path. Therefore, if a single path in a tree is a model of a strong belief, then all paths in this tree are models of this strong belief. Formally, a set of models of a strong belief $\strbel$ satisfies the following condition: $$\text{If }(T,\pi)\in M_{SB}\text{, then }(T,\pi')\in M_{SB}\text{ for all }\pi'\in T.$$ A set of \emph{t-bounded models of a strong belief} $\reststr$  contains only $t$ restricted models of a strong belief. We write $\reststrs$ to denote the set of all sets of $t$-bounded models of strong beliefs. We now show that we can represent a set of models of strong beliefs by a single formula.

\begin{lemma}
Let $Ext(\reststr)$ be the set of all possible extensions of a set of bounded model of strong beliefs $\reststr$ to models, i.e. $Ext(\reststr) = \{m\in\mods\mid \restm\in\reststr{}\}$. Given a set of $t$-bounded models of strong beliefs $\reststr$, there exists a strong belief formula $form(\reststr)$ such that $Mod(form(\reststr))=Ext(\reststr$).
\end{lemma}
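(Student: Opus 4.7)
The idea is to exhibit $form(\reststr)$ as a finite disjunction of characteristic formulas, each describing one of the $t$-bounded models in $\reststr$ up to depth $t$, with each characteristic formula phrased as a strong belief (i.e.\ a Boolean combination of $\Box_0$-formulas).

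First I would establish a finiteness observation. Since $\prop{}$ and $\act{}$ are finite and every path in a $t$-bounded model is cut off at time $t$, there are only finitely many non-isomorphic $t$-bounded trees. Moreover, the closure condition on strong-belief models (if $(T,\pi)\in M_{SB}$ then $(T,\pi')\in M_{SB}$ for every $\pi'\in T$) says that each element of $\reststr$ is determined, up to choice of an actual path, by its underlying tree. Hence $\reststr$ corresponds, up to isomorphism, to a finite set $\{T_1,\dots,T_n\}$ of appropriate $t$-bounded trees.

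Next I would construct, for each $T_i$, a characteristic formula $\chi_{T_i}$ by descending induction on depth $k=t,t-1,\dots,0$. At a node $s$ of depth $k$ the inductive clause conjoins: (a) the literals over $\{p_k, pre(a,b,\dots)_k : p\in\prop{},\ a,b,\ldots\in\act{}\}$ matching $\valp(s)$; (b) an existential clause $\Diamond_k(do(a_j)_k\wedge \chi_{T_i,s_j})$ for every successor $s_j$ of $s$ with action $a_j$; and (c) a universal closure clause $\Box_k\bigvee_j(do(a_j)_k\wedge \chi_{T_i,s_j})$ forbidding extra successors. Wrapping the root clause in $\Box_0$ (with negation used to encode any required $\Diamond_0$) places $\chi_{T_i}$ in $\sbel{}$. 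I would then set
\[
form(\reststr)\;:=\;\bigvee_{i=1}^{n}\chi_{T_i},
\]
which, being a Boolean combination of $\Box_0$-formulas, is itself a strong belief formula. To verify $Mod(form(\reststr))=Ext(\reststr)$, the $(\supseteq)$ direction is immediate: if $m\in Ext(\reststr)$ then $\restm\cong T_i$ for some $i$ and $m\models\chi_{T_i}$ by construction. For $(\subseteq)$, if $m\models \chi_{T_i}$ then the $\Diamond_k$ clauses force every branch of $T_i$ to appear in $m$ up to depth $t$ while the $\Box_k$ clauses forbid any additional branches, so $\restm\cong T_i$ and $m\in Ext(\reststr)$.

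The main obstacle is the recursive construction of $\chi_T$: it must pin down the tree \emph{exactly} up to depth $t$, not merely be satisfied by $T$. Pairing the existential $\Diamond_k$ witnesses with the single universal $\Box_k$ disjunction is what delivers this exactness, but the bookkeeping has to be compatible with the appropriateness conditions of Definition~4 (so that the $pre$/$post$ clauses at each node are consistent with what the model semantics already forces) and must leave the behaviour of $m$ strictly beyond depth $t$ entirely unconstrained, since such behaviour is irrelevant to membership in $Ext(\reststr)$ and therefore should not be mentioned by $\chi_T$.
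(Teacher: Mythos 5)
Your proposal is correct and matches the paper's approach: the paper likewise builds $form(\cdot)$ as a disjunction of characteristic formulas of the finitely many $t$-bounded trees (visible in its use of $form(m_1\rest{})\vee form(m_2\rest{})$ in the representation-theorem proof), relying on finiteness of $\prop{}$, $\act{}$, and the time bound, and on the path-independence of strong-belief models to get a Boolean combination of $\Box_0$-formulas. The only point to watch is that the finiteness claim implicitly requires restricting the $pre(a,b,\ldots)_k$ propositions to sequences of bounded length, which the paper also assumes tacitly when it asserts finitely many non-equivalent formulas up to $t$.
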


\begin{corollary}
\label{cor:strongbeliefformula}
Given a $t$-bounded strong belief set $B$, there exists a formula $\psi$ such that $B=\{\varphi\mid\psi\vdash\varphi\}$.
\end{corollary}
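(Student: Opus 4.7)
The plan is to reduce the corollary directly to the preceding Lemma by translating the syntactic object $B$ into a suitable set of $t$-bounded models of strong beliefs, and then translating back via soundness and strong completeness of \prelogic{}. First I would form the semantic counterpart of $B$: let $M_B = \{m \in \mods \mid m \models B\}$ and let $\reststr_B = \{\restm \mid m \in M_B\}$ be its $t$-bounded version. Since every formula in $B$ lies in $\sbel{}$, its truth at $(T,\pi)$ is independent of the particular $\pi \in T$; hence $M_B$ satisfies the closure condition ``if $(T,\pi)\in M_B$, then $(T,\pi')\in M_B$ for all $\pi'\in T$'', so $\reststr_B$ is genuinely an element of $\reststrs$ in the sense required by the Lemma.

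Next I would apply the Lemma to obtain the strong belief formula $\psi := form(\reststr_B)$, which satisfies $Mod(\psi) = Ext(\reststr_B)$. By the very definition of $\reststr_B$, this extension set is exactly $M_B$, so $\psi$ and $B$ have the same class of models. I would then invoke strong completeness (the Completeness Theorem in the excerpt) to translate this back to the syntactic level: for any $\varphi$,
\[
\psi \vdash \varphi \;\Longleftrightarrow\; Mod(\psi)\subseteq Mod(\varphi) \;\Longleftrightarrow\; M_B \subseteq Mod(\varphi) \;\Longleftrightarrow\; B \vdash \varphi.
\]
Therefore $\{\varphi \mid \psi \vdash \varphi\}$ is precisely the deductive closure of $B$, and since in the KM setting we may take $B$ to be closed under $\vdash$ (a standard belief-set assumption, made explicit in the paragraph that introduces the representation), we obtain $B = \{\varphi \mid \psi \vdash \varphi\}$, as required.

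The main obstacle to watch is the step that asserts $\reststr_B \in \reststrs$, i.e.\ that restricting the set of models of $B$ to depth $t$ really gives a well-defined set of $t$-bounded models of strong beliefs; this is where the syntactic restriction $B \subseteq \sbel{}$ is essential, because it is exactly this that guarantees the required path-independence. The second, more mundane subtlety is that boundedness at $t$ is what prevents the Lemma's formula from needing to be infinitary: $t$-boundedness yields only finitely many non-equivalent strong-belief formulas, so a single propositional $\psi$ suffices to encode all of $B$. Once these two points are in place, the corollary follows immediately from the Lemma plus completeness.
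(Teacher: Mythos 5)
Your proposal is correct and follows essentially the same route as the paper's own proof sketch: construct the set of $t$-bounded models of $B$, verify it is a legitimate set of $t$-bounded models of a strong belief via path-independence, apply the Lemma to obtain $\psi=form(\reststr)$ with $Mod(\psi)=Ext(\reststr)=Mod(B)$, and conclude by completeness that $B$ equals the deductive closure of $\psi$. Your version is in fact slightly more explicit than the paper's sketch about why the construction satisfies the Lemma's hypotheses (and about the direction $Ext(\reststr_B)\subseteq M_B$ resting on $t$-boundedness of the formulas in $B$), which is a welcome elaboration rather than a divergence.
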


\begin{proof}[Proof Sketch]
For a given belief set $B$, we can show that there exists a set of $t$-bounded models of a strong belief $\reststr$ s.t. $Ext(\reststr)=Mod(B)$.  If $\psi=form(\reststr)$, then $Mod(\psi)=Mod(B)$, and by the completeness theorem, $B=Cl(\psi)$.
\end{proof}

Given an intention database $I$, we define a selection function $\sel$ that tries to accommodate a new intention based on strong beliefs. The selection function specifies preferences on which intention an agent would like to keep in the presence of the new beliefs. 

\begin{definition}[Selection Function]
Given an intention database $I$, a \emph{selection function} $\sel:\strbels\times \ib \rightarrow \idb$ maps a set of models of a strong belief and an intention to an updated intention database---all bounded up to $t$--- such that if $\sel(\restM,\{i\})=I'$, then:
\begin{enumerate}
\item $(\restM,I')$ is coherent.
\item If $(\restM,\{i\})$ is coherent, then $i\in I'$.
\item If $(\restM,I\cup\{i\})$ is coherent, then $I\cup\{i\}\subseteq I'$.
\item $I'\subseteq I\cup\{i\}$.
\item For all $I''$ with $I'\subset I''\subseteq I\cup\{i\}$:$(\restM,I'')$ is not coherent.
\end{enumerate}
\end{definition}

The five conditions on the selection function are in direct correspondence with postulates (P7)-(P10), (P12) of the agent revision function $*_t$. Note that postulate (P11) doesn't have a corresponding condition in the definition above but is represented by the fact that the selection function takes the revised beliefs as input. That is, intention revision occurs after belief revision.

KM define a faithful assignment from a belief formula to a pre-order over models. Since we are also considering intentions, we extend this definition such that it also maps intentions databases to selection functions.

\begin{definition}[Faithful assignment]
A \emph{faithful assignment} is a function that assigns to each strong belief formula $\psi\in \sbel\rest{}$ a total pre-order $\lept$ over $\mods{}$ and to each intention database $I\in \mathbb{D}\rest{}$ a selection function $\sel$ and satisfies the following conditions:
\begin{enumerate}
\item If $m_1,m_2\in Mod(\psi)$, then $m_1\lept m_2$ and $m_2\lept m_1$.
\item If $m_1\in Mod(\psi)$ and $m_2\not\in Mod(\psi)$, then $m_1< m_2$.
\item If $\psi\equiv\phi$, then $\lept = \le_\phi^t$.
\item If $\restt=T_2\rest{}$, then $(T,\pi)\lept (T_2,\pi_2)$ and $(T_2,\pi_2)\lept (T,\pi)$.
\end{enumerate}
\end{definition}

Conditions 1 to 3 on the faithful assignment are the same as those of KM. Condition 4 ensures that we do not distinguish between models in the total pre-order $\lept$ whose trees are the same up to time $t$. This is essentially what is represented in the revision function by bounding the all input of the revision function $*_t$ up to $t$. Moreover, $\lept$ does not distinguish between models obtained by selecting two different paths from the same tree. This corresponds to the fact that we are using strong belief formulas in the revision, which do not distinguish between different paths in the same tree as well.

\begin{theorem}[Representation Theorem]
An agent revision operator $*_t$ satisfies postulates (P1)-(P12) iff there exists a faithful assignment that maps each $\psi$ to a total pre-order $\lept$ and each $I$ to a selection function $\sel$ such that if $(\psi,I)*_t(\varphi,i)=(\psi',I')$, then:
\begin{enumerate}
\item $Mod(\psi')=\min(Mod(\varphi),\lept)$
\item $I' = \sel(Mod(\psi'),i)$
\end{enumerate}
\end{theorem}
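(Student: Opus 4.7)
The statement is an iff, so the argument splits into soundness (assignment $\Rightarrow$ postulates) and completeness (postulates $\Rightarrow$ assignment), and within each direction I would treat the belief component and the intention component separately.

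\emph{Soundness.} Suppose a faithful assignment satisfying the two displayed clauses is given. Postulates (P1)--(P6) are the KM postulates specialised to our setting; by Corollary~\ref{cor:strongbeliefformula} every strong belief set is represented by a single formula, and since the recipe $Mod(\psi')=\min(Mod(\varphi),\lept)$ is exactly KM's, their standard derivation of (P1)--(P6) from a faithful assignment transfers verbatim, with condition~4 of the faithful assignment ensuring that the minimisation is not perturbed by $t$-irrelevant parts of models. Postulates (P7)--(P10) and (P12) stand in direct one-to-one correspondence with conditions 1--5 of the selection function applied to $(Mod(\psi'),i)$ and so hold at once, while (P11) is immediate because $\sel(Mod(\psi'),i)$ depends on nothing besides $Mod(\psi')$ and $i$.

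\emph{Completeness.} Given $\agr$ satisfying (P1)--(P12), I would first extract $\lept$. For each pair $m_1,m_2$ of models, the preceding Lemma supplies a strong belief formula $\varphi_{m_1,m_2}$ whose model set is the extension of the $t$-bounded images of $\{m_1,m_2\}$. Fix any intention $i$ and any auxiliary belief $\psi$ and set $m_1 \lept m_2$ iff $m_1 \in Mod(\psi')$, where $(\psi,\emptyset)\agr(\varphi_{m_1,m_2},i)=(\psi',I')$. The crucial step is well-definedness: the pre-order must not depend on the choice of $i$, on the auxiliary intention base (here taken as $\emptyset$), or on the representative formula. The first two hold because (P1)--(P6) constrain $\psi'$ in exactly the way the pure KM postulates constrain belief revision, regardless of the intention inputs; the third follows from (P4). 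Totality, reflexivity and transitivity of $\lept$ and conditions 1--3 of the faithful assignment are then established by the standard KM arguments, and condition~4 is automatic because the whole construction lives on $t$-bounded objects. The selection function is then defined by $\sel(Mod(\psi'),i) = I'$ whenever $(\psi,I)\agr(\varphi,i) = (\psi',I')$; postulate (P11) is precisely what makes this a function of $(Mod(\psi'),i)$ alone, and its five defining conditions are then (P7)--(P10) and (P12) read off directly. A final KM-style lifting argument verifies that the two displayed equations hold for arbitrary inputs and not merely the pair-formulas used to construct $\lept$.

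\emph{Main obstacle.} As in every KM-style theorem, the hard part is the well-definedness of $\lept$; the novel burden here is to show that the intention component of a revision never leaks into the belief comparison. This is discharged by combining (P1)--(P6), which isolate belief dynamics from intention inputs, with (P11), which cleanly separates intention dynamics from belief dynamics in the opposite direction, but checking that these postulates suffice across all choices of auxiliary $\psi$, $I$, and $i$ is the crux of the argument.
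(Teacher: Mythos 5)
Your proposal is correct and follows essentially the same route as the paper's proof: the pre-order $\lept$ is extracted by revising with the disjunction of the pair's $t$-bounded model formulas, the selection function is read off from the intention component of revision with well-definedness secured by (P4) and (P11), and conditions (P7)--(P10),(P12) transfer directly to the five selection-function clauses. The only cosmetic difference is that you drop the ``$m_1\models\psi$'' disjunct from the definition of $\lept$, which is harmless since (P2) makes it redundant.
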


\begin{proof}[Proof Sketch]
We only sketch the proof of $``\Rightarrow'':$ Suppose that some agent revision operator $*_t$ satisfies postulates (P1)-(P12). Given models $m_1$ and $m_2$, let $(\psi,\emptyset)*_t(form(m_1\rest{})\vee form(m_2\rest),\epsilon)=(\psi',\emptyset)$. We define $\lept$ by $m_1\lept m_2$ iff $m_1\models\psi$ or $m_1\models\psi'$. We also define $\sel$ by $\sel(\restM_{SB},i)=I'$, where $(form(\restM_{SB}),I)*_t (\top,i)=(\psi_2,I')$ (note that $\psi_2\equiv form(\restM_{SB}$)). 

Let us prove condition 4 of Definition 9. For $m_1=(T,\pi)$ and $m_2=(T_2,\pi_2)$, let $\psi'$ be as above. Since $\psi,\psi'\in \sbel\rest{}$ and $\restt=T_2\rest$, we have $m_1\models\psi$ iff $m_2\models\psi$ and $m_1\models\psi'$ iff $m_2\models\psi'$, so $m_1\lept m_2$ and $m_2\lept m_1$. 

Following KM, one can show that conditions 1 to 3 from Definition 9 hold, and furthermore that $Mod(\psi')=\min(Mod(\varphi),\lept)$. We now prove $I' = \sel(Mod(\psi'),i)$. By our definition of $\sel$ we have that $(\psi',I)*_t (\top,i)=(\psi_2,\sel(Mod(\psi'),i))$ (recall that $\psi'\equiv \psi_2$). Since $(\psi,I)*_t(\varphi,i)=(\psi',I')$, by (P11) we obtain that $I' = \sel(Mod(\psi'),i)$. Using postulate (P7)-(P10) and (P12) we can prove that $\sel$ is a selection function.
\end{proof}

Finally, it turns out to be straightforward to formulate the DP postulates for iterated revision in our framework for the strong beliefs and to prove their representation theorem. Due to space constraints we have omitted the results, but they can be found in a separate technical report.\footnote{\\ http://www.dropbox.com/s/798g6ki0aqt3gds/ijcai2016proofs.pdf?dl=1}

\section{Related Work}
Grant \etal{}~\shortcite{grant2010} develop AGM-style postulates for belief, intention, and goal revision. They provide a detailed analysis and propose different reconsideration strategies, but restrict themselves to a syntactic analysis. Much effort in combining AGM revision with action logics (e.g., the Event Calculus~\cite{Mueller2010}, Temporal Action Logics~\cite{Kvarnstrom2005}, extensions to the Fluent Calculus~\cite{Thielscher2001}, and extensions to the Situation Calculus (see~\cite[Ch.2]{Patkos2010} for an overview)) concentrates on extending these action theories to incorporate \emph{sensing} or \emph{knowledge-producing actions}. Shapiro \etal{}~\shortcite{Shapiro2011} extend the Situation Calculus to reason about beliefs rather than knowledge by introducing a modality $B$ and shows that both the AGM postulates and the DP postulates are satisfied in this framework. A similar approach concerning the Fluent Calculus has been formalized by Jin and Thielscher~\shortcite{Jin2004}, and is further developed by Scherl~\shortcite{Scherl2005} and Scherl and Levesque~\shortcite{Scherl2003} by taking into account the frame problem as well. However, none of these approaches prove representations theorems linking revision to a total pre-order on models. Baral and Zhang~\shortcite{Baral2005} model belief updates on the basis of semantics of modal logic S5 and show that their knowledge update operator satisfies all the KM postulates.  Bonanno~\shortcite{Bonanno2007} combines temporal logic with AGM belief revision by extending a temporal logic with a belief operator and an information operator. Both these approaches do not take action or time into account and do not prove representation theorems. The concept of strong beliefs has been discussed extensively in the literature, for instance in the story of \emph{Little Nell}~\cite{Mcdermott1982} or a paradox found in \emph{knowledge-based programs}~\cite{Fagin1995} (see van der Hoek \emph{et al.}~\shortcite{vanderHoek2003} for a detailed discussion).

\section{Conclusion}

We develop a logical theory for reasoning about temporal beliefs and intentions based on Shoham's database perspective. We propose postulates for revision of strong beliefs and intentions, and prove a representation theorem relating the postulates to our formal model. \icard{} prove a comparable representation theorem, but we show in this paper that it does not hold in general. In their proof, they use a canonical model construction, while we proof our representation theorem using standard techniques from belief revision. It remains an open problem whether a canonical model construction is possible when proving the representation theorem of \icard

For future work, we aim to extend our formalism with goals, which seems a natural extension in order to allow the agent to, for instance, replace intentions instead of merely discarding them. This paves the road to develop a richer notion of intentions, such as that ``intentions normally pose problems for the agents; the agent needs to determine a way of achieving them''~\cite{Cohen1990}. Interestingly, adding goals to the formalism blurs the distinction between planner and databases. If the databases take over part of the planning, then well-known problems such as the frame problem become more stringent. Existing action logics (e.g., the Event Calculus or the Fluent Calculus) and database approaches (e.g., TMMS~\cite{Dean1987}) have dealt with these problems in detail, so comparing and possibly enriching them with our formalism seems both useful and relevant future work.

\section{Acknowledgments}
We thank Leon van der Torre and Eric Pacuit for useful comments. Marc van Zee and Dragan Doder are both funded by the National Research Fund (FNR), Luxembourg, respectively by the  RationalArchitecture and the PRIMAT project.

\bibliographystyle{named}
\bibliography{refs}

\end{document}